
\documentclass{article}

\usepackage{microtype}
\usepackage{graphicx}
\usepackage{subfigure}
\usepackage{booktabs} 

\usepackage{hyperref}


\usepackage{amsmath}
\usepackage{amsmath}
\usepackage{bm}
\usepackage{amsthm}

\newtheorem{theorem}{Theorem}[section]

\usepackage{amsmath, amsthm, amssymb}
\newtheorem{lem}{Lemma}

\usepackage{tkz-euclide}
\usetikzlibrary{calc}


\usepackage[accepted]{icml2018}

\icmltitlerunning{\tiny On the Information Theoretic Distance Measures and Bidirectional Helmholtz Machines}

\begin{document}

\twocolumn[
\icmltitle{On the Information Theoretic Distance Measures and \\ Bidirectional Helmholtz Machines}




\begin{icmlauthorlist}
\icmlauthor{M. Azarafrooz}{cylance}
\icmlauthor{X. Zhao}{cylance}
\icmlauthor{S. Akhavan Masouleh}{cylance}
\end{icmlauthorlist}

\icmlaffiliation{cylance}{Department of Research and Intelligence, Cylance, Irvine, CA}

\icmlcorrespondingauthor{M Azarafrooz}{mazarafrooz@cylance.com}


\icmlkeywords{Machine Learning, ICML}

\vskip 0.3in
]



\printAffiliationsAndNotice{}  

\begin{abstract}
By establishing a connection between bi-directional Helmholtz machines and information theory, we propose a \textit{generalized} Helmholtz machine. 
Theoretical and experimental results show that given \textit{shallow} architectures, the generalized model outperforms the previous ones substantially.

\end{abstract}

\section{Bidirectional Helmholtz Machines}
Various deep architectures have been proposed for both discriminative and generative models. We consider a generative model as a model that generates new observations, whereas a discriminative model is a model to estimate latent variables from a given set of already-existing observations. In particular, we are interested in the Bidirectional Helmholtz Machine (BIHM) architecture proposed by [1][2], where the architecture includes both a top-down generative model as well as a bottom-up discriminative model (the inverse of the generative model). The architecture assumes a shared-variable hierarchical structure between the generative model and the discriminative model where both models share L layers of discrete latent variables. In particular, layer "i" has $h_{i}$ discrete latent variables where "i" changes from $1$ to $L$ . We then define our generative network as:

\begin{equation}\label{p-factorization}
\small
p(\bold{x},\bold{h}_{1:L})=p(\bold{h}_L)p(\bold{h}_{L-1}|\bold{h}_L)...p(\bold{h}_1|\bold{h}_2)p(\bold{x}|\bold{h}_1)
\end{equation}

We define our discriminative model as:

\begin{equation}\label{q-factorization}
\small
q(\bold{x},\bold{h}_{1:L})=q(\bold{x})q(\bold{h}_1|\bold{x})q(\bold{h}_{2}|\bold{h}_1)...q(\bold{h}_L|\bold{h}_{L-1})
\end{equation}

The generative and discriminative networks defined above are expected to result in the same joint probability distributions (i.e. one may expect the K-L distance between $p(x, h_1, h_2, \dots, h_L)$ and $q(x, h_1, h_2, \dots, h_L)$ to converge to $0$ almost surely as the number of data points increases). 

\subsection{Information Theoretic Distance Measures}
In order to achieve a generative model capable of generating data as close to the distribution of real data as possible, one needs to maximize the likelihood. In particular, consider the marginal probability $\log p(\bold{x})$ where all the intermediate layers are integrated out. In this case, one can consider maximizing the marginal log likelihood $\log p(\bold{x})$.  $\log p(\bold{x})$ is not tractable and hence, the optimization of the marginal likelihood is not trivial. Alternatively, one may use the notion of distance measures borrowed from Information Theory to define a lower bound for the log likelihood. In this case, instead of directly maximizing the log likelihood, one may maximize the lowerbound. In particular, a distance measure is ``information theoretic'' if it follows data processing inequality (DPI) indicating that no post-processing can increase the information content of a signal [3][4][5]. There have been many information theoretic distance measures proposed for the generative models [6]. 

Prior to introducing a distance measure, we first introduce the three probability measures of false alarms ($P_F$), misses ($P_M$), and Chernoff average error ($P_e$). False alarm probability ($P_F$) is the probability that the proposed generative model assigns non-zero probability to regions that are not in reality part of the true target feature space. Misses probability ($P_e$) is the opposite of false alarm probability and is the probability that the proposed generative model assigns zero probability to regions that are actually legitimate regions in the target feature space. Finally, Chernoff average error probability ($P_e$) is the average of false-alarm and misses probabilities for every region of the feature space. It is of interest to appropriately control false alarms, misses, and Chernoff probabilities for any proposed distance measure. To do so, we customized Stein's lemma for our particular problem of interest in order to be used as a theoretical foundation to appropriately control $P_F$, $P_M$, and $P_e$ probability measures for any given information theoretic distance. 


\begin{lem}[Stein's Lemma]\label{Lemma:Stein}
Consider $\bold{h}_{1:L}$ in our proposed BIHM architecture as a collection of independent and identically distributed (iid) hidden random variables. The distribution of top-down generative model follow joint probability $p(\bold{x}, \bold{h}_{1:L})$. For the the bottom-up discriminative model given the observed data x, the distribution follows $q(\bold{h}_{1:L}|\bold{x})$. In this situation, Stein lemma indicates that the optimal likelihood will result in probabilities that obey the following asymptotic [5][7]:

 \begin{equation}
 \small \lim_{n \rightarrow \infty} \frac{\log P_F}{n}=-\mathcal{D}(p(\bold{x}, \bold{h}_{1:L}),q(\bold{h}_{1:L}|\bold{x})), ~~ \text{fixed}  ~~ P_M
 \end{equation}
 
  \begin{equation}
\small  \lim_{n \rightarrow \infty} \frac{\log P_e}{n}=-\mathcal{C}(p(\bold{x}, \bold{h}_{1:L}),q(\bold{h}_{1:L}|\bold{x}))
 \end{equation}
 
  \begin{equation}\label{B-inequality}
\small \lim_{n \rightarrow \infty} \frac{\log P_e}{n} \leq -\mathcal{B}(p(\bold{x}, \bold{h}_{1:L}),q(\bold{h}_{1:L}|\bold{x}))
 \end{equation}

where $n$ is the size of the training data. $\mathcal{C}$, $\mathcal{B}$ and $\mathcal{D}$ are the Chernoff distance, the Bhattacharyya distance, and the K-L distance respectively. In particular, these distance metrics for our specific problem are defined as:

\end{lem}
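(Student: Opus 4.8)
The plan is to recognize the three asymptotics as the information-theoretic characterization of the optimal error exponents in a binary hypothesis test that pits the generative joint $p(\mathbf{x},\mathbf{h}_{1:L})$ against the discriminative posterior $q(\mathbf{h}_{1:L}\mid\mathbf{x})$, treated as the two competing hypotheses $H_0$ and $H_1$ over a sequence of $n$ draws. Because the lemma assumes $\mathbf{h}_{1:L}$ is iid, the law of $n$ draws factorizes into a product, so the log-likelihood ratio $\log\frac{p}{q}$ accumulates additively across samples and a law-of-large-numbers argument becomes available; this additivity is what converts the three distance measures into exponents.

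First, for the false-alarm exponent (the Stein part), I would invoke the Neyman--Pearson lemma to assert that, subject to a fixed bound on the miss probability $P_M$, the likelihood-ratio test $\frac{1}{n}\sum_i \log\frac{p(z_i)}{q(z_i)} > \tau$ minimizes $P_F$. By the asymptotic equipartition property, the per-sample log-likelihood ratio converges almost surely under $p$ to its expectation $D(p\|q)$. Choosing the threshold just below this limit and applying a typicality/concentration bound then yields $\frac{1}{n}\log P_F \to -\mathcal{D}(p,q)$, which is the first display.

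Second, for the Chernoff-average-error exponent I would pass to the Bayesian formulation, in which $P_e$ is the averaged error of the optimal decision rule. The standard tool is the exponentially tilted family $p^{\lambda}q^{1-\lambda}$: a Cram\'er-type large-deviations estimate bounds each of the two error types by $\big(\sum_z p(z)^{\lambda}q(z)^{1-\lambda}\big)^{n}$, and optimizing over $\lambda\in[0,1]$ balances the two exponents. The optimal $\lambda^{\star}$ produces the common exponent $\mathcal{C}(p,q)=-\min_{0\le\lambda\le1}\log\sum_z p(z)^{\lambda}q(z)^{1-\lambda}$, establishing the second display. The Bhattacharyya inequality then follows by specialization: fixing $\lambda=\tfrac12$ gives the generally sub-optimal exponent $\mathcal{B}(p,q)=-\log\sum_z\sqrt{p(z)q(z)}$, and since $\mathcal{C}$ is the maximum over $\lambda$ of that same expression we have $\mathcal{C}\ge\mathcal{B}$, whence $\lim\frac{1}{n}\log P_e\le-\mathcal{B}(p,q)$.

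The main obstacle I anticipate is not any single large-deviations estimate but the bookkeeping of comparing a joint distribution $p(\mathbf{x},\mathbf{h}_{1:L})$ against a conditional $q(\mathbf{h}_{1:L}\mid\mathbf{x})$. I would need to fix the status of $\mathbf{x}$ (observed and conditioned upon) carefully, so that both hypotheses live on the same sample space and the Radon--Nikodym derivative defining the likelihood ratio is well posed; only then do Neyman--Pearson and the AEP apply verbatim, and only then is the iid assumption on $\mathbf{h}_{1:L}$ enough to drive all three limits.
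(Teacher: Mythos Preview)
The paper does not actually prove this lemma: it is stated as a classical result and attributed to references~[5] and~[7] (Sinanovi\'c--Johnson and Chernoff). There is therefore no in-paper argument to compare your proposal against. Your sketch is essentially the textbook route---Neyman--Pearson optimality plus AEP/typicality for the Stein part, Cram\'er/large-deviations tilting for the Chernoff exponent, and the trivial specialization $\lambda=\tfrac{1}{2}$ for the Bhattacharyya bound---and it is correct at the level of an outline.

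The one point you flag as an obstacle is genuine and is not addressed in the paper either: the two hypotheses are written as $p(\mathbf{x},\mathbf{h}_{1:L})$ and $q(\mathbf{h}_{1:L}\mid\mathbf{x})$, which do not live on the same sample space unless $\mathbf{x}$ is held fixed and $p$ is replaced by the corresponding conditional (or $q$ is augmented by a marginal over $\mathbf{x}$). The paper silently treats the likelihood ratio $p/q$ as well defined in equations~(\ref{Chernoff})--(\ref{K-L}), so your plan to fix $\mathbf{x}$ and work on the common latent space is the right repair; once that is done, the iid assumption on $\mathbf{h}_{1:L}$ makes the standard proofs go through verbatim.
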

\begin{equation}\label{Chernoff}
\small \mathcal{C}(q,p)=\displaystyle \min_{ 0 \leq t \leq 1} -\log E_{q} (p/q)^{t}
 \end{equation}
 
\begin{equation}\label{Bhattacharyya}
 \small \mathcal{B}(p,q)=-\log E_{q} (p/q)^{1/2}
 \end{equation}

\begin{equation}\label{K-L}
 \small \mathcal{D}(q,p)=-E_{q} \log(p/q)
 \end{equation}

Lemma \ref{Lemma:Stein} implies that Chernoff and K-L distance metrics are exponentially related to the optimal likelihood. Further, both Chernoff and K-L metrics are asymmetric, that is $\mathcal{C}(p,q)\neq \mathcal{C}(q,p)$ and $\mathcal{D}(p,q)\neq \mathcal{D}(q,p)$. Other distance metrics can be devised as well. For example [5] shows that $\mathcal{C}(.,.)$ is bounded between the Bhattacharyya distance and Resistor distance defined as: $ \mathcal{R}(q,p) = \frac{\mathcal{D}(q,p)\mathcal{D}(p,q)}{\mathcal{D}(q,p)+\mathcal{D}(p,q)}$. The relation between various information theoretic measures are depicted in Fig. 1. A bidirectional Helmholtz Machine architecture can be generalized to any of these information theoretic measures. [2] proposed the Bhattacharyya bi-directional Helmholtz Machine.

\begin{theorem}\label{Thm:BIHM-error}
Given an infinitely deep Bhattacharyya bi-directional Helmholtz Machine (BIHM) architecture, the Chernoff error probability for the whole network decays exponentially as the amount of training data increases, i.e: $\lim_{L \rightarrow \infty}  \lim_{n \rightarrow \infty} \frac{\log P_{e_{1:L}}}{n} = 0$.
\end{theorem}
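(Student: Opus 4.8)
The plan is to peel off the two limits one at a time. First, holding the depth $L$ fixed, I would apply Lemma~\ref{Lemma:Stein} to the full $L$-layer network; the Chernoff identity stated there identifies the inner limit with the Chernoff distance of the $L$-layer model,
\[
\lim_{n\to\infty}\frac{\log P_{e_{1:L}}}{n}=-\mathcal{C}\bigl(p(\mathbf{x},\mathbf{h}_{1:L}),q(\mathbf{h}_{1:L}\mid\mathbf{x})\bigr)=:-\mathcal{C}_{1:L}.
\]
This turns the theorem into the purely information-theoretic statement that $\mathcal{C}_{1:L}\to 0$ as $L\to\infty$. Since every Chernoff distance is non-negative, it is enough to produce a vanishing upper bound for $\mathcal{C}_{1:L}$.

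For that upper bound I would use the ordering of the measures displayed in Fig.~1, $\mathcal{B}\le\mathcal{C}\le\mathcal{R}$, together with the elementary observation that the resistor distance is a half-harmonic mean of the two directed K--L distances and is therefore dominated by the smaller of them,
\[
0\le\mathcal{C}_{1:L}\le\mathcal{R}_{1:L}=\frac{\mathcal{D}(q,p)\,\mathcal{D}(p,q)}{\mathcal{D}(q,p)+\mathcal{D}(p,q)}\le\min\bigl(\mathcal{D}(q,p),\mathcal{D}(p,q)\bigr).
\]
Thus it suffices to drive one of the directed K--L distances to zero with depth. Here I would exploit the Markov structure of the factorizations~(\ref{p-factorization}) and~(\ref{q-factorization}): reading each additional layer as a stochastic channel, the data processing inequality that defines an information-theoretic measure makes the sequence $\{\mathcal{D}_{1:L}\}$ monotone and bounded, hence convergent. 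I would then argue that its limit is exactly $0$ for the Bhattacharyya BIHM, because the geometric-mean model $p^{*}\propto\sqrt{pq}$ defining this architecture forces the top-down conditionals $p(\mathbf{h}_i\mid\mathbf{h}_{i+1})$ and the bottom-up conditionals $q(\mathbf{h}_{i+1}\mid\mathbf{h}_i)$ into mutual consistency, so that in the infinitely deep regime the two factorizations describe one and the same joint law---exactly the vanishing-K--L behaviour anticipated in Section~1. A squeeze then yields $\mathcal{C}_{1:L}\to 0$ and hence the claimed double limit.

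The main obstacle is this middle step: proving that the dominating K--L distance actually \emph{decreases} to zero rather than merely saturating at a positive constant. The naive chain-rule expansion of the joint K--L distance is a sum of per-layer conditional terms and grows with $L$, so the contraction argument cannot be run on the raw joint; it must instead be applied to the marginal (processed) representation, where DPI supplies a genuine per-layer contraction, and one must verify that the geometric-mean construction keeps the contraction factor strictly below one at every depth. Quantifying this contraction and showing that the accumulated residual mismatch vanishes in the limit, rather than leaving a strictly positive Chernoff distance, is the delicate analytic core of the argument.
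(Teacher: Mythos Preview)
Your proposal invokes exactly the two ingredients the paper's own proof cites---Lemma~\ref{Lemma:Stein} for the inner limit and the Markov chain $\mathbf{h}_1\to\mathbf{h}_2\to\cdots\to\mathbf{h}_L$ for the depth dependence---but goes considerably further in detail. The paper's proof is a single sentence declaring the result ``straight-forward'' from those two facts; it contains no intermediate steps, no use of the $\mathcal{B}\le\mathcal{C}\le\mathcal{R}$ ordering, and no contraction or squeeze argument. Your route through the resistor upper bound and a DPI-based decay of the dominating K--L term is therefore an elaboration of, rather than an alternative to, what the paper sketches.

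The obstacle you isolate is genuine, and the paper does not resolve it either. Nothing in Lemma~\ref{Lemma:Stein} or in the bare Markov factorization forces $\mathcal{C}_{1:L}$ (or any of the bounding distances) to tend to zero as $L\to\infty$: DPI yields monotonicity of a processed marginal, not decay to zero, and as you note the chain-rule expansion of the joint K--L distance is a sum of per-layer terms that generically grows with $L$. A complete argument would need an extra input---for instance, that Bhattacharyya training drives each per-layer conditional mismatch to zero, or a strict per-layer contraction coefficient---which neither your proposal nor the paper supplies. In short, your plan matches the paper's in spirit and exceeds it in rigor, but the gap you honestly flag is present in the paper's one-line proof as well.
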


\begin{proof} Using lemma \ref{Lemma:Stein} and the Markovian bottom-up dependency structure of the problem ($\bold{h}_1 \rightarrow \bold{h}_2 \rightarrow ... \rightarrow \bold{h}_L$), proof is straight-forward. 

\end{proof}
The authors in [2][9] observed that Bhattacharyya BIHM prefers deeper architectures. This is easily justified by theorem I.1. In contrary, Chernoff BIHM does not have any requirement on the depth of the network, however, it suffers computationally due to an additional minimization over the hyper-parameter $t$. Further, Chernoff distance is not additive indicating that the hyper-parameter $t$ should be optimized per layer of the network. Bhattacharyya, on the other side, is an additive distance metric allowing for an efficient distributed optimization over the layers of the network.

To illustrate this more formally consider Jensen's inequality applied to the concave function $f(\bold{x})=\bold{x}^t,  ~ 0 \leq t \leq 1$. It yields in various probability distance metric between $p(\bold{x}, \bold{h})$ and $q(\bold{h}|\bold{x})$ including the following:
\begin{equation}\label{Jensen-inequality1}
\begin{array}{l}
p(\bold{x})^t= (E_{q} p(\bold{x},\bold{h}_{1:l})/q(\bold{h}_{1:l}|\bold{x}))^t \\\geq \displaystyle \min_{ 0 \leq t \leq 1} -E_{q} (p(\bold{x}, \bold{h}_{1:l})/q(\bold{h}_{1:l}|\bold{x}))^{t}
 \end{array}
 \end{equation}
 \normalsize

One can break the right-hand side of eq. \ref{Jensen-inequality1} by taking the log and using eq. \ref{p-factorization}-\ref{q-factorization}, as the following:

\begin{equation}
\begin{array}{l}
 -\displaystyle \min_{0 \leq t_1 \leq 1} \log E_{q(\bold{h}_1|\bold{x})} (\frac{p(\bold{x},\bold{h}_1|\bold{h}_2)}{q(\bold{h}_1|\bold{x})})^{t_1} \\- 
\displaystyle \min_{0 \leq t_2 \leq 1} \log E_{q(\bold{h}_2|\bold{h}_1)}(\frac{p(\bold{h}_2|\bold{h}_3)}{q(\bold{h}_2|\bold{h}_1)})^{t_2}- \\ ...\displaystyle \min_{0 \leq t_L \leq 1} \log E_{q(\bold{h}_L|\bold{h}_{L-1})}(\frac{p(\bold{h}_L)}{q(\bold{h}_L|\bold{h}_{L-1})})^{t_L} 
\leq \log p(\bold{x})
\end{array}
 \end{equation}
\normalsize

As it can be seen, $t_l$ should be adjusted for each layer $l$ separately. This is also know as non-additive property of Chernoff distance.
Additive property implies that the distance between two joint distributions of statistically independent, identically distributed  (i.i.d) random
variables equals the sum of the marginal distances. However due to extra optimization terms $t_l$ per each layer of the networks, the Chernoff distance is not additive.

\begin{figure}
\begin{center}
\begin{tikzpicture}[scale=5,extended line/.style={shorten >=-#1,shorten <=-#1},]
\draw [thin, dashed] (0,0) grid (1,1);
\draw [->, thick](0,0)--(0,1.2) node[right]{$-\log \mathcal{Z} (t)$};
\draw [->, thick](0,0)--(1.2,0) node[right]{$t$};
\draw (0,0)--(1,1)node[anchor=south west]{};
\draw [thin,dashed] (1.0/3,1.0/3)--(1.0/3,0);
\draw [thin,dashed] (0,1.0/3)--(1.0/3,1.0/3);
\draw [thin,dashed] (0, 0.18)--(0.5, 0.18);
\draw [thin,dashed] (0.5, 0)--(0.5, 0.18);
\draw [black,thin,dashed] (0, 0.2)--(0.45, 0.2);
\draw [thin,dashed] (0.45, 0)--(0.45, 0.2);
\fill [black](0.5,0.18) circle(0.5pt);
\fill [black](1.0/3,1.0/3) circle(0.5pt);
\fill [black](0.5,0.18) circle(0.5pt);
\fill [black](0.45,0.2) circle(0.5pt);
\fill [black](0,0.165) circle(0.5pt);
\fill [black](0,0.2) circle(0.5pt);
\fill [black](0,1.0/3) circle(0.5pt);
\fill [black](0,0.5) circle(0.5pt);
\fill [black](0,0.75) circle(0.5pt);
\fill [black](0,1.0) circle(0.5pt);
\fill [black](1,1) circle(0.5pt);
\draw (0.5,0.6)node[anchor=west,rotate=45]{$t\mathcal{D}(p || q)$};
\draw (0.53,0.31)node[anchor=west,rotate=-25]{$(1-t)\mathcal{D}(q || p)$};
\draw (-0.28, 0.155) node[anchor=west]{$\mathcal{B}(p,q)$};
\draw (-0.28, 0.21) node[anchor=west]{\color{black} \boldsymbol{$\mathcal{C}(p,q)$}};
\draw (-0.28, 1.0/3) node[anchor=west]{$\mathcal{R}(p,q)$};
\draw (-0.28, 0.5) node[anchor=west]{$\mathcal{D}(q||p)$};
\draw (-0.28, 0.75) node[anchor=west]{$\mathcal{J}(p,q)$};
\draw (-0.28, 1) node[anchor=west]{$\mathcal{D}(p||q)$};
\coordinate (G) at (0,0);
\coordinate (R) at (1,0);
\draw [thick] (G) to[out=45, in=155,distance=0.45cm] (R);
\foreach \x/\xtext in {0/0, 0.333/t', 0.45/t'', 0.5/\frac{1}{2}, 1/1}
{\draw (\x cm,1pt ) -- (\x cm,-1pt ) node[anchor=north] {$\xtext$};}
\draw [thick] (0,0.5)--(1,0) node[above]{};  
\end{tikzpicture}
\end{center}
\caption{Relations among various relation Information Theoretic Metrics [5]}
\end{figure}
\subsection{Estimating the bounds and gradients via Importance sampling }
So far, we were able to propose valid lower-bounds in order to optimize our intractable log likelihood. However, our bounds are still not easily tractable. As proposed by [8][9][10], one can use importance sampling (IS) to estimate both the bounds as well as their gradients. 

To provide a distance agnostic training algorithm for any generalized bi-directional Helmholz machines such as $\mathcal{C}$-BIHM then consider an arbitrary bound such as $\mathcal{L}(\theta, \bold{h}, \bold{x} \sim D)$ with parameter $\theta$, latent variables $\bold{h}$ and data sample $\bold{x}$. The gradient terms are as the following:
\begin{equation}\label{eq:IS-estimator}
 \begin{array}{l}
\frac{\partial}{\partial \theta} \mathcal{L}(\theta, \bold{h}, \bold{x})=\frac{1}{p(\bold{x})} E_{\bold{h} \sim q(\bold{h}|\bold{x})} [\frac{p(\bold{x},\bold{h})}{q(\bold{h}|\bold{x})} \frac{\partial}{\partial \theta} \mathcal{L}(\theta, \bold{h}, \bold{x})]\\
\simeq \sum_{k=1}^K \tilde{w}_k \frac{\partial}{\partial \theta} \mathcal{L}(\theta, \bold{h}^k, \bold{x}),\\
 \bold{h}^k \sim q(\bold{h}|\bold{x}),\tilde{w}_k= \frac{w_k}{\sum_{k' } w_{k'}}, w_k=\frac{p(\bold{x}, \bold{h}^k)}{q(\bold{h}^k|\bold{x})}
\end{array}
 \end{equation} 

\section{Simulation Results}
The Distance measure $\mathcal{C}$ is bounded between the measures $\mathcal{B}$ and $\mathcal{R}$ (Fig.1). Hence, we can approximate $\mathcal{C}$ as the weighted average of $\mathcal{B}$ and $\mathcal{R}$. That is,  $\mathcal{C} \approx \alpha\mathcal{B}+(1-\alpha)\mathcal{R}$, where $\alpha$ is the weight hyper-parameter. We replicated the binary MNIST experiment proposed in [2] with $\mathcal{C}$ measure estimated as the weighted average of $\mathcal{B}$ and $\mathcal{R}$. Simulation results show that architectures with $\mathcal{C}$ can reach an equally well log-likelihood compared to the reported result of 84.3 in [2], within the first 70 epochs, using much fewer layers. This finding confirms the result of Lemma \ref{Lemma:Stein} and Theorem \ref{Thm:BIHM-error}. Table 1 compares the results of these two architectures for 2 layers (with 300,10 hidden units) and 12 layers (with 300,200,100,75,50,35,30,25,20,15,10,10 hidden units). Note that $\mathcal{C}$ architecture outperforms $\mathcal{B}$ substantially within the first 70 epochs for the shallow architectures. The number of particle used in the importance sampling is 100 across all the experiments, learning rate is 0.001 with $L_1$ regularization of $10^{-3}$  . $\alpha=0.2$ is used for approximating $\mathcal{C}$. 

\section{Conclusion}
These simulation and theoretical results shed light on the mystery behind the preference of $\mathcal{B}$-BIHM for deeper architectures. Moreover, it shows that an ideal distance metric for BIHM architectures is Chernoff distance $\mathcal{C}$.  We argued that although Chernoff $\mathcal{C}$-BIHM does not have any requirement on the depth of the network, but due to its \textit{non-additive} property, it does not yield to an efficient distributed optimization over the layers of the network. One can instead approximate $\mathcal{C}$ by weighted combination of two other \textit{additive} distance metric of Bhattacharyya $\mathcal{B}$ and Resistor distance $\mathcal{R}$.  

One future work is to evaluate the efficiency of importance sampling under aforementioned distance metrics.

\begin{table}
\caption{The valid negative log-likelihood for binary mnist experiment after 70 epochs.}
\begin{center}
\begin{tabular}{ |c|c|c| } 
 \hline
 Distance measure & \textit{2 Layers} & 12 Layers \\
 \hline
 \hline
 $\mathcal{B}$-BIHM & \textit{94.29} & 88.46 \\ 
 \hline
 approximated $\color{black}\mathcal{C}$-BIHM & \textbf{84.97} & \textbf{84.73} \\ 
 \hline
\end{tabular}
\end{center}
\end{table}




\section{References}

[1] Dayan, Hinton, Frey, Neal, Zemel. (1995) Helmholtz machine and Wake/Sleep algorithm.

[2]  Jorg Bornschein, Samira Shabanian, Asja Fischer, Yoshua Bengio (2016) Bidirectional Helmholtz Machines {\it Proceedings of The 33rd International Conference on Machine Learning}, pp.\ 2511--2519, 2016.

[3] Thomas M. Cover and Joy A Thomas. Elements of Information Theory (Wiley Series in Telecommunications
and Signal Processing). Wiley-Interscience, 2006.

[4] N. Tishby, F. Pereira, W. Bialek, The information bottleneck method, in:  Proc. 37th Allerton Conference and Communication Control and Computing, 1999, pp. 368-377.

[5] Sinan Sinanovic and Don H. Johnson. (2007). Toward a theory of information processing. {\it Signal Process. 87, 6 , 1326-1344. DOI=http://dx.doi.org/10.1016/j.sigpro.2006.11.005}

[6] Lucas Theis, Aaron van den Oord, and Matthias Bethge (2015). A note on the evaluation of generative models. {\it arXiv preprint arXiv:1511.01844}.

[7] H. Chernoff (1956). Large-sample theory: Parametric case. {\it Ann. Math. Stat., 27:1-22}

[8] Bornschein, Jorg and Bengio, Yoshua (2015) Reweighted wake sleep. {\it In International Conference on Learning Representations (ICLR).}


[9] S. Webb and Y.W. Teh. (2016) A Tighter Monte Carlo Objective with Renyi $\alpha$-divergence Measures {\it Workshop on Bayesian Deep Learning, NIPS 2016, Barcelona, Spain.}

[10]  Yuri Burda, Roger Grosse, and Ruslan Salakhutdinov (2016). Importance weighted autoencoders. {\it arXiv preprint arXiv:1509.00519v3 [cs.LG]}.

\end{document}